\newtheorem{theorem}{Theorem}
\newtheorem{lemma}{Lemma}
\newtheorem{definition}{Definition}
\newtheorem{assumption}{Assumption}
\newtheorem{remark}{Remark}
\renewcommand{\footnoterule}{%
  \kern -3pt 
  \hrule width \textwidth height 0.4pt
  \kern 2pt  
}
\newcommand\blfootnote[1]{%
  \begingroup
  \renewcommand\thefootnote{}\footnote{#1}%
  \addtocounter{footnote}{-1}%
  \endgroup
}
\definecolor{darkblue}{rgb}{0, 0, 0.5}
\definecolor{darkgreen}{RGB}{50,100,0}
\definecolor{darkred}{RGB}{200, 0, 0}
\definecolor{lightblue}{RGB}{220,235,250}
\newenvironment{itemize*}%
 {\leftmargini=20pt\begin{itemize}%
  \setlength{\itemsep}{3pt}%
  \setlength{\parskip}{0pt}%
  }%
 {\end{itemize}}
\newenvironment{enumerate*}%
 {\begin{enumerate}%
  \setlength{\itemsep}{0pt}%
  \setlength{\parskip}{0pt}}%
 {\end{enumerate}}
\tiny\color{gray}, stepnumber=1, numbersep=10pt,
\newcolumntype{P}[1]{>{\centering\arraybackslash}p{#1}}
\title{Revisiting LLM Reasoning via Information Bottleneck}
\author[1,\footnotesize{\textbf{*}}]{Shiye Lei}
\author[2,\footnotesize{$\dagger$}]{Zhihao Cheng}
\author[2]{Kai Jia}
\author[3,\footnotesize{$\dagger$}]{Dacheng Tao}
\affil[1]{The University of Sydney}
\affil[2]{ByteDance}
\affil[3]{Nanyang Technological University}
\date{}
\begin{document}
\maketitle
\blfootnote{\footnotesize{\textbf{*}}Work done during an internship at ByteDance.}
\blfootnote{\footnotesize{$\dagger$} Corresponding authors: \href{mailto:zhihao.cheng@bytedance.com}{zhihao.cheng@bytedance.com},\ \href{mailto:dacheng.tao@ntu.edu.sg}{dacheng.tao@ntu.edu.sg}}

\begin{abstract}
Large language models (LLMs) have recently demonstrated remarkable progress in reasoning capabilities through reinforcement learning with verifiable rewards (RLVR). By leveraging simple rule-based rewards, RL effectively incentivizes LLMs to produce extended chain-of-thought (CoT) reasoning trajectories, progressively guiding them toward correct answers. However, existing approaches remain largely heuristic and intuition-driven, limiting the development of principled methodologies. In this paper, we present a theoretical characterization of LLM reasoning grounded in information bottleneck (IB) principle, introducing \textit{IB-aware reasoning optimization} (IBRO), a framework that encourages reasoning trajectories to be both \textit{informative} about the final correct answer and \textit{generalizable} across diverse prompts. We derive a practical token-level surrogate objective and propose an efficient approximation, resulting in the lightweight \textit{IB regularization} method. This technique integrates seamlessly into existing RL-based post-training frameworks without additional computational overhead, requiring only a one-line code modification. Empirically, we validate IB regularization across multiple mathematical reasoning benchmarks and RL algorithms, demonstrating consistent improvements in LLM reasoning performance.
\end{abstract}

\section{Introduction}
Benefiting from the extensive knowledge embedded in pre-trained large language models (LLMs), post-training has increasingly focused on reinforcement learning (RL) as a means to further enhance human alignment \citep{ouyang2022training} and reasoning capabilities \citep{jaech2024openai}. Recent studies have demonstrated that reinforcement learning with verifiable rewards (RLVR) provides a scalable and effective approach for incentivizing reasoning in LLMs \citep{guo2025deepseek}. This approach relies solely on rule-based supervision, without requiring explicit process-level rewards or supervised chain-of-thought (CoT) annotations. Post-training with RLVR encourages LLMs to spontaneously generate coherent reasoning chains, resulting in substantial gains on complex tasks in coding and mathematics.

Although recent empirical advances are promising, approaches for enhancing LLM reasoning remain largely heuristic and intuition-driven, limiting the development of principled methodologies. For instance, exploration, a core characteristic of RL, is critical for discovering high-quality reasoning trajectories. Accordingly, prior works often advocate heuristically maintaining high generation entropy, {\it i.e.}, encouraging token-level uncertainty, during post-training \citep{cui2025entropy,cheng2025reasoning,yao2025diversity}. In contrast, another line of research suggests that explicitly reducing entropy or uncertainty, even in the absence of reward signals, can lead to improved reasoning performance \citep{agarwal2025unreasonable,gao2025one,li2025confidence}. These conflicting findings underscore the need for a rigorous theoretical understanding of reasoning in LLMs, which remains elusive yet essential.

In this work, we address this theoretical gap by analyzing LLM reasoning from the perspective of information bottleneck (IB) principle \citep{tishby2000information, tishby2015deep}, which emphasizes the importance of discarding irrelevant information while preserving task-relevant signals. We introduce \textit{IB-aware reasoning optimization} (IBRO), an information-theoretic framework designed to optimize LLM reasoning capability (\Cref{def:main}). Specifically, IBRO encourages reasoning processes to maximize informativeness with respect to ({\it w.r.t.}) correct answers while minimizing dependency on irrelevant, prompt-specific details. We then derive a token-level surrogate IBRO objective (\Cref{thm:main}) and establish a high-probability generalization bound to theoretically justify the IBRO formulation (\Cref{thm:bound}). To facilitate practical implementation, we derive an efficient approximation of the IBRO objective, resulting in a novel \textit{IB regularization} term. Concretely, IB regularization modulates the token-level entropy based on their corresponding advantages, incentivizing higher entropy for critical tokens and penalizing uninformative ones. Our IB regularization seamlessly integrates into existing RL-based post-training frameworks, introducing negligible computational overhead and requiring only a single line of code modification.

To comprehensively evaluate our IB regularization, we perform post-training using two representative RL algorithms: PPO \citep{schulman2017proximal} and DAPO \citep{yu2025dapo}, which correspond to the mainstream with-critic and without-critic paradigms, respectively \citep{shao2024deepseekmath}. All experiments are conducted on Qwen2.5-7B \citep{yang2024qwen2_5}, a widely adopted LLM base model that has not been specifically optimized for instruction following or reasoning. We evaluate performance on multiple mathematical reasoning benchmarks, including AMC23 and AIME24/25. Across these tasks, we observe consistent and stable improvements, with an average gain of two points on both PPO and DAPO. We further conduct fine-grained analysis {\it w.r.t.} entropy dynamics and response length to demonstrate the stability and compatibility of IB regularization.

\section{Related Works}

\paragraph{LLM Reasoning} Pre-training endows LLMs with vast amounts of knowledge, and a key technique for further enhancing their reasoning capabilities is CoT prompting, which encourages step-by-step problem solving \citep{wei2022chain}. Recent seminal works such as OpenAI-o1 \citep{jaech2024openai} and DeepSeek-R1 \citep{guo2025deepseek} adopt RL post-training to incentivize the emergence of CoT, enabling models to tackle complex reasoning tasks such as mathematics and code generation. Building on these efforts, a growing body of research has sought to understand the key factors in RL post-training that contribute to improved reasoning. One critical factor is generation entropy, which quantifies the uncertainty in the model’s token-level output distribution. In practice, entropy often collapses rapidly toward zero during post-training, leading to overconfident predictions, reduced exploration, and ultimately, diminished reasoning capability. To counter this, \citet{yu2025dapo} propose ClipHigher that relaxes clip constraints to allow more off-policy update for low-probability tokens. In parallel, explicit entropy regularization has gained attention as a direct intervention. However, its efficacy remains debated: on one hand, entropy minimization has been shown to promote reasoning without relying on explicit reward signals \citep{agarwal2025unreasonable, gao2025one, li2025confidence}; on the other hand, several works advocate for maintaining higher entropy to preserve exploration and thus foster reasoning \citep{cui2025entropy,cheng2025reasoning,yao2025diversity}. In this paper, we revisit the challenge of LLM reasoning from an information-theoretic perspective, aiming to balance informativeness and generalization in reasoning process. Our analysis leads to a simple yet effective advantage-aware entropy regularization, which integrates seamlessly into existing RL post-training methods.

\textbf{Information Bottleneck (IB)} posits that an effective latent representation should (1) discard irrelevant information from the input to promote generalization, achieved by minimizing the mutual information between the input and the latent features, (2) while retaining information that is predictive of the target, achieved by maximizing the mutual information between the latent code and the label~\citep{tishby2000information, tishby2015deep}. Although computing mutual information terms is generally intractable, \citet{alemi2017deep} propose a variational lower bound that enables practical estimation. IB principle has received empirical support from~\citet{michael2018on} and theoretical justification from~\citet{kawaguchi2023does}. In this work, we revisit LLM reasoning through the lens of the IB principle and derive a simple yet effective regularization term to enhance reasoning quality. While \citet{yu2025memorization} also analyze LLMs from an IB perspective, their focus is on improving the pre-training phase by minimizing matrix-based entropy~\citep{giraldo2014measures} for compression. In contrast, our work targets the post-training phase and aims to enhance LLM reasoning through a novel IB–based analysis. 

\section{Preliminaries}

\paragraph{RLVR}
Given a question or prompt \(\bm{q}\), a LLM \(\pi_\theta\), parameterized by \(\theta\), generates a response as a sequence of tokens \((o_1, o_2, \ldots, o_T)\) in an autoregressive manner, where each token is sampled according to \(\pi_\theta(o_t \mid o_{<t}, \bm{q})\).  In the RLVR setting, the training dataset is given by \(\mathcal{S} = \{(\bm{q}_i, \bm{a}_i)\}_{i=1}^m\), where \(\bm{a}_i\) is the answer to question \(\bm{q}_i\), without intermediate reasoning chains. Several RL objectives have been developed based on PPO \citep{schulman2017proximal}. For completeness, we briefly recall the standard PPO objective:
\begin{equation*}
    \mathcal{J}_{\texttt{PPO}} = \mathbb{E}_{(\bm{q},\bm{a})\sim \mathcal{S},\,o_{\leq t} \sim \pi_{\theta_{\text{old}}}(\cdot \mid \bm{q})} \left[\min\left(r_t A_t,\, \text{clip}\left(r_t, 1 - \epsilon, 1 + \epsilon \right) A_t\right)\right],
\end{equation*}
where \(r_t = \frac{\pi_\theta(o_t \mid o_{<t}, \bm{q})}{\pi_{\theta_{\text{old}}}(o_t \mid o_{<t}, \bm{q})}\) is the importance sampling ratio, \(A_t = A(o_t; o_{<t}, \bm{q})\) denotes the advantage of selecting token \(o_t\), and the hyperparameter $\epsilon$ controls the clipping range. Intuitively, \(A_t\) quantifies how much better token \(o_t\) is compared to other possible tokens at position \(t\).

While PPO typically requires training a separate critic model to estimate \(A_t\), recent methods such as GRPO eliminates the need for critic learning by introducing a group-normalized reward strategy. Specifically, for each prompt, \(G\) rollouts are sampled, and their corresponding rewards \(\{R_i\}_{i=1}^G\) are used to compute normalized token-level advantages for the \(i\)-th response as $A_{i,t} = \frac{R_i - \text{mean}(R)}{\text{std}(R)}$.

\textbf{Mutual Information (MI)} quantifies the amount of shared information between two random variables:
\begin{equation*}
I(X; Y) = H(X) - H(X \mid Y) = H(Y) - H(Y \mid X),
\end{equation*}
where the entropy \(H(\cdot)\) is defined as \(H(X) = \mathbb{E}_{X\sim p(X)} \left[-p(X) \log p(X) \right]\). For notational simplicity, we slightly abuse notation by letting $X$ and $Y$ denote both random variables and specific realizations. A large \(I(X; Y)\) indicates a strong statistical dependence between \(X\) and \(Y\); that is, observing \(Y\) significantly reduces the uncertainty of \(X\), and vice versa.

\textbf{Information Bottleneck (IB)} provides a principled framework for characterizing the trade-off between a model's \emph{representation complexity} and its \emph{predictive power}. Given a model \(\mathcal{M}\) that first encodes the input \(X\) into a latent representation \(Z\), which is then used to predict the target \(Y\), {\it IB seeks an optimal representation \(Z\)} by solving
\begin{equation*}
    \min_{Z \sim \mathcal{M}(Z \mid X)} \quad I(X; Z) - \beta I(Z; Y),
\end{equation*}
where \( I(X; Z) \) quantifies the amount of information retained about the input, {\it i.e.}, the complexity of the representation, and \( I(Z; Y) \) measures how informative the representation is for predicting the output. The coefficient \( \beta > 0 \) balances compression against predictive accuracy. Minimizing \(I(X; Z)\) penalizes representations that capture unnecessary details from \(X\), thereby encouraging generalization, while maximizing \(I(Z; Y)\) ensures that the representation retains sufficient information for accurate prediction. Together, the IB objective promotes representations that are both compact and task-relevant.

\section{Reasoning via Information Bottleneck}

Given a powerful post-trained LLM $\pi$ and a prompt \(\bm{q}\), the LLM engages in a reasoning process to generate a CoT \(\bm{r}\), with the objective of arriving at the correct answer \(\bm{a}\). Here, $\bm{a}$ denotes the ground truth rather than LLM predictions. This raises a fundamental question: \emph{what characterizes a good reasoning process, or equivalently, a good CoT?} At a high level, a satisfactory CoT should be (1) \textit{informative}, effectively guiding the model to produce the correct final answer; and (2) \textit{generalizable} such that the reasoning process does not depend heavily on the specific prompt and thus transfer well to unseen questions. Motivated by information bottleneck principle, we propose the following formulation to optimize LLM reasoning in an IB-aware manner.
\begin{definition}[IB-Aware Reasoning Optimization]
\label{def:main}
Given a base LLM $\pi$ and a dataset of prompt-answer pairs $(\bm{q},\bm{a})$, we optimize the reasoning ability of $\pi$ by
\begin{equation*}
\label{eq:ib}
    \min_{\pi(\bm{r} \mid \bm{q})} I(\bm{q}; \bm{r}) - \beta I(\bm{r}; \bm{a}),
\end{equation*}
where \(I(\bm{q}; \bm{r})\) quantifies the information retained from the prompt and \(I(\bm{r}; \bm{a})\) measures the informativeness of the reasoning path toward the answer.
\end{definition}
IB-aware reasoning optimization (IBRO) seeks reasoning processes \(\bm{r}\) that minimize dependence on unnecessary details in \(\bm{q}\), while maximizing relevance to the target answer \(\bm{a}\), and the hyperparameter \(\beta > 0\) balances compression and predictiveness.
\begin{remark}
\label{remark:1}
Since overfitting is rarely a concern in the context of LLM RL post-training, response accuracy is typically prioritized over compression. As a result, a large value of  \(\beta > 1\) is preferred to bias the optimization toward maximizing accuracy rather than generalization.
\end{remark}

\subsection{Practical Objective}
While IBRO offers a principled guideline for LLM reasoning optimization, the mutual information terms are intractable and do not naturally align with the token-level training objectives commonly used in LLM fine-tuning. To this end, we derive a more practical objective suitable for both understanding and implementation. First, the mutual information terms can be expressed by entropy as
\begin{equation*}
\label{eq:mi2entropy}
I(\bm{q}; \bm{r}) = H(\bm{r}) - H(\bm{r} \mid \bm{q}), \quad
I(\bm{r}; \bm{a}) = H(\bm{r}) - H(\bm{r} \mid \bm{a}).
\end{equation*}

Since LLM reasoning-oriented post-training primarily aims to improve answer quality in question, the generated reasoning CoT $\bm{r}$ becomes highly conditioned on the input question and is rarely optimized independently. Therefore, we propose a rational assumption as below.
\begin{assumption}
\label{asp:1}
\(\pi(\bm{r})\) remains invariant during LLM RL post-training.
\end{assumption}
Under \Cref{asp:1}, the term \(H(\bm{r}) = \mathbb{E}_{\bm{r} \sim \pi_\theta}[-\log \pi_\theta(\bm{r})]\) can be treated as a constant during post-training. Moreover, leveraging the inequality \(H(\bm{r} \mid \bm{a}) \leq H(\bm{r},\bm{q} \mid \bm{a}) = H(\bm{r} \mid \bm{q}, \bm{a}) + H(\bm{q} \mid \bm{a})\), where $H(\bm{q} \mid \bm{a})$ is a constant that depends only on data distribution, we can obtain the following practical formulation.
\begin{theorem}[Surrogate IBRO objective] 
\label{thm:main}
Assume \Cref{asp:1} holds, and let the reasoning trajectory be \(\bm{r} = (o_1, o_2, \ldots, o_T)\). Then IBRO admits the following upper bound (up to an additive constant):
\begin{equation*}
\label{eq:ib objective}
\min_{\pi(\bm{r}|\bm{q})} \;\; 
\sum_{t=1}^T \left( \beta H\left(o_t \mid o_{<t}, \bm{q}, \bm{a}\right) - H\left(o_t \mid o_{<t}, \bm{q}\right) \right)
\end{equation*}
\end{theorem}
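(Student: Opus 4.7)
The plan is to reduce the mutual information objective to a chain-rule sum of token-level conditional entropies by combining the entropy identities already given in the excerpt, Assumption~\ref{asp:1}, and a single entropy inequality. First I would substitute the two identities $I(\bm{q};\bm{r}) = H(\bm{r}) - H(\bm{r}\mid\bm{q})$ and $I(\bm{r};\bm{a}) = H(\bm{r}) - H(\bm{r}\mid\bm{a})$ directly into the IBRO objective to obtain
\[
I(\bm{q};\bm{r}) - \beta\, I(\bm{r};\bm{a}) \;=\; (1-\beta)\,H(\bm{r}) \;-\; H(\bm{r}\mid\bm{q}) \;+\; \beta\, H(\bm{r}\mid\bm{a}).
\]
Under Assumption~\ref{asp:1}, the marginal entropy $H(\bm{r})$ is invariant during post-training, so the first term contributes only an additive constant and can be dropped from the minimization.

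Next I would upper bound the remaining term $\beta H(\bm{r}\mid\bm{a})$ using the inequality flagged in the text, namely $H(\bm{r}\mid\bm{a}) \leq H(\bm{r},\bm{q}\mid\bm{a}) = H(\bm{r}\mid\bm{q},\bm{a}) + H(\bm{q}\mid\bm{a})$, which follows from the non-negativity of conditional mutual information $I(\bm{r};\bm{q}\mid\bm{a})\geq 0$ together with the chain rule for joint entropy. Since $H(\bm{q}\mid\bm{a})$ depends only on the data distribution over prompt--answer pairs and not on the optimization variable $\pi(\bm{r}\mid\bm{q})$, it contributes another additive constant. This yields the surrogate upper bound $-H(\bm{r}\mid\bm{q}) + \beta H(\bm{r}\mid\bm{q},\bm{a})$ modulo constants.

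Finally I would invoke the chain rule for entropy along the autoregressive factorization $\bm{r} = (o_1,\ldots,o_T)$ to rewrite both conditional entropies as telescoping token-level sums,
\[
H(\bm{r}\mid\bm{q}) \;=\; \sum_{t=1}^T H(o_t \mid o_{<t},\bm{q}), \qquad H(\bm{r}\mid\bm{q},\bm{a}) \;=\; \sum_{t=1}^T H(o_t \mid o_{<t},\bm{q},\bm{a}),
\]
and combine them linearly to obtain exactly the stated per-token expression $\sum_{t}\bigl(\beta H(o_t\mid o_{<t},\bm{q},\bm{a}) - H(o_t\mid o_{<t},\bm{q})\bigr)$.

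The proof is essentially a bookkeeping exercise, so I do not expect any hard obstacle. The only subtle point worth being careful about is justifying that the constants discarded during the reduction (namely $(1-\beta)H(\bm{r})$ and $\beta H(\bm{q}\mid\bm{a})$) are indeed independent of the optimization variable $\pi(\bm{r}\mid\bm{q})$: the former requires Assumption~\ref{asp:1} explicitly, while the latter requires noting that marginalizing $\pi(\bm{r}\mid\bm{q})$ out of the joint distribution $p(\bm{q},\bm{a})\pi(\bm{r}\mid\bm{q})$ leaves $p(\bm{q},\bm{a})$ and hence $H(\bm{q}\mid\bm{a})$ untouched. Once these two invariance observations are made explicit, the remaining manipulations are standard identities and the upper-bound claim follows immediately.
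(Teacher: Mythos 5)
Your proposal is correct and follows essentially the same route as the paper: substituting the entropy identities, dropping $(1-\beta)H(\bm{r})$ via Assumption~\ref{asp:1}, applying the bound $H(\bm{r}\mid\bm{a}) \leq H(\bm{r}\mid\bm{q},\bm{a}) + H(\bm{q}\mid\bm{a})$ with $H(\bm{q}\mid\bm{a})$ a data-dependent constant, and finishing with the entropy chain rule over the autoregressive factorization. Your explicit justification that the discarded constants do not depend on the optimization variable $\pi(\bm{r}\mid\bm{q})$ is a point the paper leaves implicit, and is a welcome addition.
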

\begin{remark}
(1) \(H(o_t \mid o_{<t}, \bm{q}, \bm{a})\) quantifies how much knowing the correct answer \(\bm{a}\) reduces uncertainty about token \(o_t\). A smaller value indicates that \(o_t\) is highly informative for predicting \(\bm{a}\). Minimizing this term ensures that the reasoning chain \(\bm{r}\) captures key information aligned with the correct answer.
    
(2) \(H(o_t \mid o_{<t}, \bm{q})\) is the standard token-level entropy. Maximizing this term promotes diversity and reduces over-reliance on the prompt \(\bm{q}\), thereby improving reasoning generalization.
\end{remark}

With the IBRO framework, we further derive a generalization bound based on information-theoretic analysis in \citep{kawaguchi2023does}, and the proof detail can be found in \Cref{app:proof_main}.
\begin{theorem}[IBRO generalization bound]
\label{thm:bound}
Let \( \pi \) be a LLM, training dataset $\mathcal{S}=\{(\bm{q}_i,\bm{a}_i)\}_{i=1}^m$ are i.i.d. drawn from the joint data distribution $\mathcal{D}$. 
Suppose the LLM parameters are updated by \( \Delta\theta \) during learning. Let \( \texttt{ACC}(\mathcal{S}) \) and \( \texttt{ACC}(\mathcal{D}) \) denote the empirical and population accuracy of \( \pi \) after training, respectively. Define the IBRO loss as $\mathcal{L}_\texttt{IB}=\beta I(\bm{r} \mid \bm{q},\bm{a}) - I(\bm{r}\mid \bm{q})$ computed {\it w.r.t.} $\mathcal{D}$. If $\beta \geq 2$, then, for any \( \delta > 0 \), with probability at least \( 1 - \delta \) over the sampling of \( \mathcal{S} \), the generalization gap $\Delta(\mathcal{S}) = \vert \texttt{ACC}(\mathcal{S}) - \texttt{ACC}(\mathcal{D}) \vert$ satisfies:
\[
\Delta(\mathcal{S}) \lesssim \sqrt{\frac{\mathcal{L}_\texttt{IB} + \|\Delta\theta\|^2 + \log \frac{1}{\delta}}{m}} + \tilde{\mathcal{O}}\left( \sqrt{ \frac{ \|\Delta\theta\|^2 + 1 }{m} } \right).
\]
\end{theorem}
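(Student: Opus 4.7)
The plan is to instantiate the information-theoretic generalization framework of \citet{kawaguchi2023does} in the autoregressive reasoning setting, where the prompt $\bm{q}$ plays the role of the input, the answer $\bm{a}$ plays the role of the target label, and the CoT $\bm{r}$ plays the role of the hidden representation. The accuracy functional $\texttt{ACC}$ is naturally $[0,1]$-valued, so classical PAC-Bayes-type tools apply. The skeleton of the argument is: (i) obtain a PAC-Bayes bound on $\Delta(\mathcal{S})$ in terms of a KL divergence between posterior and prior parameter distributions; (ii) split this KL into a data-dependent IB contribution and a parameter-complexity term $\|\Delta\theta\|^2$; and (iii) rewrite the IB contribution using the entropy identities already used in \Cref{thm:main} together with \Cref{asp:1} so that it matches $\mathcal{L}_\texttt{IB}$.

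First I would set up a Gaussian PAC-Bayes posterior $\mathcal{N}(\theta,\sigma^2 I)$ centered at the trained parameters, and a Gaussian prior $\mathcal{N}(\theta_0,\sigma^2 I)$ centered at the initialization, so that $\text{KL} = \|\Delta\theta\|^2/(2\sigma^2)$ and a McAllester-style bound gives a gap on the order of $\sqrt{(\text{KL} + \log(1/\delta))/m}$. I would then import the decomposition from \citet{kawaguchi2023does}: when the model factors as an encoder $\bm{q}\to\bm{r}$ followed by an (essentially deterministic) predictor $\bm{r}\to\bm{a}$, the effective hypothesis complexity splits so that $I(\bm{q};\bm{r})$ controls what the encoder retains from the prompt and $I(\bm{r};\bm{a})$ controls the task-relevant content of $\bm{r}$. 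Weighting the latter by $\beta$ yields the IB functional inside the square root, and this is where the hypothesis $\beta\geq 2$ enters: it is exactly what is needed to balance the Lipschitz-type constants in their argument so that the negative $I(\bm{r};\bm{a})$ contribution dominates the positive $I(\bm{q};\bm{r})$ contribution in the upper bound.

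Next I would convert the mutual-information form of the IB functional into the entropy form used to define $\mathcal{L}_\texttt{IB}$. This is essentially the same manipulation as in the proof of \Cref{thm:main}: apply $I(X;Y)=H(Y)-H(Y\mid X)$, invoke \Cref{asp:1} to discard $H(\bm{r})$ as an additive constant, and absorb the prompt-answer entropy $H(\bm{q}\mid\bm{a})$ into the $\tilde{\mathcal{O}}$ term. The residual $\tilde{\mathcal{O}}(\sqrt{(\|\Delta\theta\|^2+1)/m})$ then collects the parameter-norm KL contribution that survives optimization over $\sigma$, along with lower-order logarithmic factors coming from the union bounds and from the $[0,1]$-boundedness of the accuracy loss.

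The main obstacle I anticipate is reconciling Kawaguchi's bound, which is stated for continuous latent representations and soft classifiers, with the discrete autoregressive setting where $\bm{r}$ is a token sequence and the $\bm{r}\to\bm{a}$ map is essentially a rule-based verifier. One has to either replace the continuous IB quantities by their autoregressively-factored counterparts using the chain rule of entropy, so the summand structure of \Cref{thm:main} falls out, or reinterpret $\bm{r}$ as a stochastic representation drawn from $\pi_\theta(\bm{r}\mid\bm{q})$ and verify that the requisite moment and sub-Gaussianity conditions still hold. The $\beta\geq 2$ threshold is the most delicate point: it is tight to the specific constant that appears when bounding the encoder-side mutual information by the PAC-Bayes KL, so any looseness there either sharpens or weakens the required threshold and must be tracked carefully.
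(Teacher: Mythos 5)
Your high-level scaffolding is right: the proof does go through Theorem~2 of \citet{kawaguchi2023does} with $\bm{q}$, $\bm{r}$, $\bm{a}$ playing the roles of input, representation, and label, and the $\|\Delta\theta\|^2$ term comes from exactly the Gaussian-prior/point-posterior KL computation you describe (\Cref{lemma:kl-bound}). But the step where $\mathcal{L}_\texttt{IB}$ enters, and hence the origin of the condition $\beta\geq 2$, is not what you propose. Kawaguchi's bound does not contain the signed IB functional $I(\bm{q};\bm{r})-\beta I(\bm{r};\bm{a})$; it contains the two nonnegative quantities $I(\bm{q};\bm{r}\mid\bm{a})$ (conditional on the label) and $H(\bm{r}\mid\bm{q},\bm{a})$ (encoder stochasticity), and the whole content of the proof is showing that their \emph{sum} is at most $\mathcal{L}_\texttt{IB}$ plus a data-distribution constant. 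Concretely: the Markov chain $\bm{a}\leftrightarrow\bm{q}\leftrightarrow\bm{r}$ gives $H(\bm{r}\mid\bm{q},\bm{a})=H(\bm{r}\mid\bm{q})$, hence $I(\bm{q};\bm{r}\mid\bm{a})=H(\bm{r}\mid\bm{a})-H(\bm{r}\mid\bm{q})$; then $H(\bm{r}\mid\bm{a})\leq H(\bm{r}\mid\bm{q},\bm{a})+H(\bm{q}\mid\bm{a})$ yields
\[
I(\bm{q};\bm{r}\mid\bm{a})+H(\bm{r}\mid\bm{q},\bm{a})\;\leq\;2H(\bm{r}\mid\bm{q},\bm{a})-H(\bm{r}\mid\bm{q})+H(\bm{q}\mid\bm{a})\;\leq\;\mathcal{L}_\texttt{IB}+H(\bm{q}\mid\bm{a}),
\]
where the last inequality is precisely where $\beta\geq 2$ is used: each of the two terms on the left contributes one copy of $H(\bm{r}\mid\bm{q},\bm{a})$, so the coefficient $2$ is combinatorial, not a Lipschitz constant. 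Your account of the threshold --- balancing constants so that ``the negative $I(\bm{r};\bm{a})$ contribution dominates the positive $I(\bm{q};\bm{r})$ contribution'' --- does not correspond to any step of the argument, and as stated it cannot work: a valid upper bound on the gap cannot simply acquire an arbitrarily large negative multiple of $I(\bm{r};\bm{a})$ without the compensating inequalities above.

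Two smaller corrections. First, \Cref{asp:1} is not needed here: $\mathcal{L}_\texttt{IB}$ in \Cref{thm:bound} is already expressed through the conditional entropies $H(\bm{r}\mid\bm{q},\bm{a})$ and $H(\bm{r}\mid\bm{q})$, so there is no marginal $H(\bm{r})$ to discard. Second, $H(\bm{q}\mid\bm{a})$ is absorbed as an additive constant depending only on the data distribution, not into the $\tilde{\mathcal{O}}\bigl(\sqrt{(\|\Delta\theta\|^2+1)/m}\bigr)$ term, which comes solely from the $I(\theta;\mathcal{S})$ bound. Your concern about transporting a continuous-representation result to the discrete autoregressive setting is legitimate, but the paper does not address it either --- it applies \Cref{thm:kawa} directly with $\bm{r}$ as a stochastic representation drawn from $\pi_\theta(\bm{r}\mid\bm{q})$.
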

RL-based post-training typically results in highly sparse parameter updates \citep{mukherjee2025reinforcement} and small KL divergence {\it w.r.t.} initial parameters \citep{rajani2025scalpel}, leading to a small \(\|\Delta\theta\|\). This suggests that the generalization bound is primarily governed by the IBRO loss \(\mathcal{L}_\texttt{IB}\).


\subsection{Information Bottleneck Regularization}
\label{sec:ib reg}
While \Cref{thm:main} provides a practical IBRO objective, computing the conditional entropy term \(H\left(o_t \mid o_{<t}, \bm{q}, \bm{a}\right)\) requires additional rollouts conditioned on the ground truth \(\bm{a}\). However, rollouts are 
time-consuming and often account for more than half of the total training time due to the autoregressive nature of LLMs. This additional requirement introduces significant computational overhead and limits the scalability of the method in practice.

To address this issue, we analyze the token-level surrogate IBRO loss under the setting \(\beta = 2\), as suggested by \Cref{remark:1}, although our analysis applies for other values of \(\beta\) as well. Since the conditional entropy satisfies the bound \(H\left(o_t \mid o_{<t}, \bm{q}, \bm{a}\right) \in [0, H\left(o_t \mid o_{<t}, \bm{q}\right)]\), we have
\[
\ell_\texttt{IB}^t = \beta H\left(o_t \mid o_{<t}, \bm{q}, \bm{a}\right) - H\left(o_t \mid o_{<t}, \bm{q}\right)
\in \left[-H_t,\,  H_t\right],
\]
where \(H_t = H\left(o_t \mid o_{<t}, \bm{q}\right)\) denotes the token-level entropy. The expression above can be rewritten as \(\ell_\texttt{IB}^t = \lambda_t H_t\), where the modulation coefficient \(\lambda_t \in [-1, 1]\) depends on the value of \(H\left(o_t \mid o_{<t}, \bm{q}, \bm{a}\right)\), and tends to be smaller for more critical or informative tokens.

To obtain a practical estimate of \(\lambda_t\), we approximate it using the {\it negative} of token advantage \(A_t = A(o_t ; o_{<t}, \bm{q})\), which also captures token importance and is readily available in existing RL frameworks. This leads to a practical approximation of the IBRO objective:
$$
\min \; \mathcal{L}_\texttt{IB} = - \sum_{t=1}^T A_t\, H_t \quad \Longleftrightarrow \quad \max \; \mathcal{J}_\texttt{IB} = \sum_{t=1}^T A_t\, H_t,
$$
where maximizing \(A_t H_t\) encourages higher entropy for critical tokens, {\it i.e.}, tokens with large advantages, and penalizes less informative ones. The term \(\mathcal{J}_\texttt{IB}\) can be seamlessly incorporated into standard RL objectives such as PPO or GRPO as an additional \textit{IB regularization} term:
\begin{equation*}
\max \; \mathcal{J} = \mathcal{J}_\texttt{RL} + \alpha\, \mathcal{J}_\texttt{IB},
\end{equation*}
where \(\mathcal{J}_\texttt{RL}\) denotes the base RL objective, such as \(\mathcal{J}_\texttt{PPO}\), and \(\alpha > 0\) controls the regularization strength.

\paragraph{Efficiency of IB Regularization} Since both \(A_t\) and \(\pi_\theta(o_t \mid o_{<t}, \bm{q})\) have been obtained during RL objective computation, the proposed IB regularization is highly efficient and introduces negligible cost to the training pipeline. Moreover, the corresponding implementation is embarrassingly simple, requiring only a single line of code modification, as shown in \Cref{lst:policy_loss}. A complete modification based the widely-used LLM post-training framework VeRL \citep{sheng2025hybridflow} is provided in \Cref{app:code}.

\begin{figure}[t]

\centering
\begin{tcolorbox}[
    colframe=gray,       
    colback=white,       
    boxrule=0.5pt,       
    arc=2pt,             
    left=0pt, right=4pt, top=1pt, bottom=2pt, 
    width=0.98\linewidth, 
    enhanced
]
\noindent
\begin{normalcode}
def compute_pg_loss(log_prob, old_log_prob, advantage, clip_cfg):
\end{normalcode}
\begin{normalcode}
    pg_loss = compute_ppo_loss(log_prob, old_log_prob, advantage, clip_cfg)
\end{normalcode}
\begin{normalcode}
    entropy = compute_entropy(log_prob)
\end{normalcode}
\begin{diffdelcode}
-   entropy_loss = compute_mean(entropy)
\end{diffdelcode}
\begin{diffaddcode}
+   entropy_loss = compute_mean(entropy * advantage)
\end{diffaddcode}
\begin{normalcode}
    pg_loss = pg_loss - entropy_coeff * entropy_loss
\end{normalcode}
\begin{normalcode}
    return pg_loss
\end{normalcode}
\end{tcolorbox}
\captionof{listing}{Pseudocode for computing the policy loss with IB regularization. Both \texttt{entropy} and \texttt{advantage} are token-level tensors. Only a single line is modified to incorporate IB regularization.}
\label{lst:policy_loss}
\end{figure}

\section{Experiments}

In this section, we evaluate the effectiveness of our IB regularization approach across various mathematical reasoning benchmarks and RL algorithms.

\paragraph{Datasets and Models}  
We perform RL post-training on the DAPO-Math-17K dataset \citep{yu2025dapo}, which consists of $17{,}000$ mathematical questions, each paired with an integer answer. As the base model, we use Qwen2.5-7B \citep{yang2024qwen2_5}, a widely adopted pre-trained LLM that has not undergone any instruction tuning or reasoning-specific training. This makes it a suitable testbed for evaluating the effectiveness of algorithms aimed at incentivizing reasoning ability. During the post-training, we evaluate the mathematical reasoning performance of our models on three benchmark datasets: AMC23, AIME24, and AIME25. Specifically, AMC23 contains $40$ problems drawn from the 2023 American Mathematics Competitions. AIME24 and AIME25 each include $30$ problems from the 2024 and 2025 editions of the American Invitational Mathematics Examination, respectively. Compared to AMC23, the AIME datasets feature substantially more challenging problems and serve as stronger indicators of advanced reasoning performance.


\paragraph{Setup}  
We conduct LLM RL post-training based on VeRL framework \citep{sheng2025hybridflow}. To evaluate our approach, we adopt two representative RL algorithms: PPO \citep{schulman2017proximal}, which requires learning critic model, and DAPO \citep{yu2025dapo}, a GRPO variant that operates without a critic. The maximum response length is set to $20{,}480$ tokens, and no KL regularization is applied {\it w.r.t.} the reference policy during training. In our PPO setup, we incorporate the \texttt{ClipHigher} strategy from DAPO \citep{yu2025dapo} to mitigate entropy collapse and assess the compatibility of our IB regularization with this effective technique. The clipping parameters are set to $\texttt{clip\_low} = 0.2$ and $\texttt{clip\_high} = 0.28$. 
We train $2000$ steps in PPO and $7200$ steps in DAPO. Additional hyperparameter details are provided in \Cref{app:implementation details}. 
Performance is reported using the \texttt{avg@32} metric, which measures the average pass rate over $32$ sampled generations per prompt.


\paragraph{Baselines} (1) \texttt{No reg}: vanilla RL post-training without entropy regularization; (2) \texttt{Naive reg}: RL training with standard entropy regularization, using $\alpha=0.001$;
(3) \texttt{IB reg}: RL training with our proposed IB regularization, setting $\alpha=0.005$ instead of $0.001$, to maintain sufficient learning signals, given the presence of both positive and negative advantages in the objective calculation.


\begin{table}[t]
\centering
\caption{\texttt{avg@32} scores of different regularization strategies on AMC23, AIME24, and AIME25. Boldface indicates the highest score within each RL algorithm (PPO or DAPO). \texttt{Base} reports the model performance prior to post-training.}
\label{tab:results}
\resizebox{\textwidth}{!}{
\begin{tabular}{cc cccc  cccc}
\toprule
& \multirow{2}{*}{Method} & \multicolumn{2}{c}{AMC23} & \multicolumn{2}{c}{AIME24} & \multicolumn{2}{c}{AIME25} & \multicolumn{2}{c}{Avg} \\ 
\cmidrule(lr){3-4} \cmidrule(lr){5-6} \cmidrule(lr){7-8}  \cmidrule(lr){9-10}
& & \texttt{top@1} & \texttt{top@10} & \texttt{top@1} & \texttt{top@10} & \texttt{top@1} & \texttt{top@10} & \texttt{top@1}& \texttt{top@10} \\ 
\specialrule{1pt}{0.2\jot}{0.15pc}
& \texttt{Base} & \multicolumn{2}{c}{$17.3$} & \multicolumn{2}{c}{$1.5$} & \multicolumn{2}{c}{$1.2$} & \multicolumn{2}{c}{$6.7$}  \\
\specialrule{0.01pt}{0.2\jot}{0.15pc}
\cellcolor{blue!15}  & \texttt{No reg} & $63.8$ & $62.8 $& $17.7$ & $16.8$ & $13.1$ & $11.9$ & $31.5$ & $30.5$ \\ 
\cellcolor{blue!15}  & \texttt{Naive reg} & $63.3$ & $62.3$ & $15.0$ & $14.3$ & $10.3$ & $9.5$ & $29.5$ & $28.7$ \\ 
\multirow{-2.8}{*}{\cellcolor{blue!15}\rotatebox{90}{PPO}} & \texttt{IB reg} & \bm{$67.3$} & \bm{$66.8$} & \bm{$20.3$} & \bm{$19.8$} & \bm{$13.6$} & \bm{$13.0$} & \bm{$33.7$} & \bm{$33.2$} \\ 
\midrule
\cellcolor{blue!15}  & \texttt{No reg} & \bm{$86.3$} & \bm{$85.7$} & $18.6$ & $18.2$ & $17.0$ & \bm{$16.3$} & $40.6$ & $40.1$ \\ 
\cellcolor{blue!15}  & \texttt{Naive reg} & $82.5$ & $82.3$ & $20.3$ & $19.7$ & $11.6$ & $11.1$ & $38.1$ & $37.7$ \\ 
\multirow{-2.8}{*}{\cellcolor{blue!15}\rotatebox{90}{DAPO}} & \texttt{IB reg} & {$85.1$} & {$84.3$} & \bm{$25.4$} & \bm{$24.6$} & \bm{$17.7$} & \bm{$16.3$} & \bm{$42.7$} & \bm{$41.7$} \\ 
\bottomrule
\end{tabular}}
\end{table}

\begin{figure}[t]
\centering
\subfigure[PPO]{
\begin{minipage}[b]{0.48\textwidth}
\centering
    		\includegraphics[width=\columnwidth]{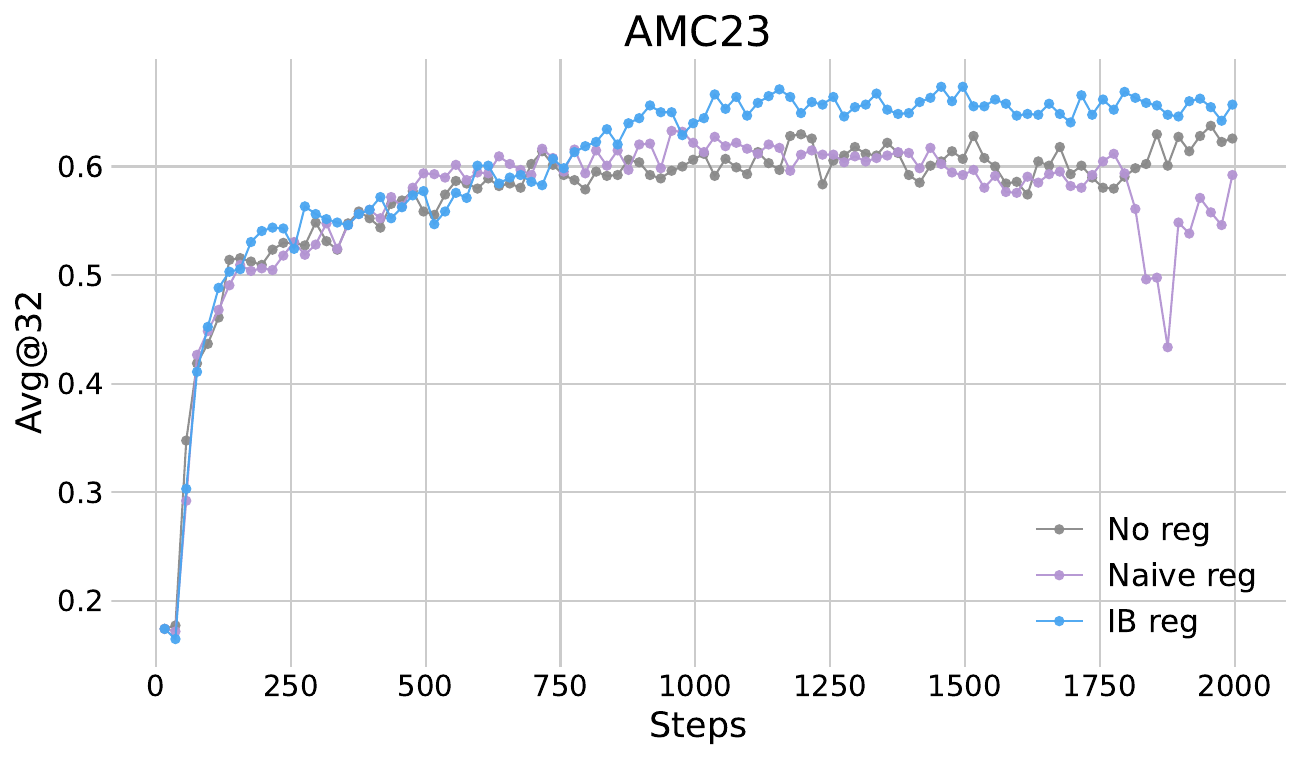}
                \includegraphics[width=\columnwidth]{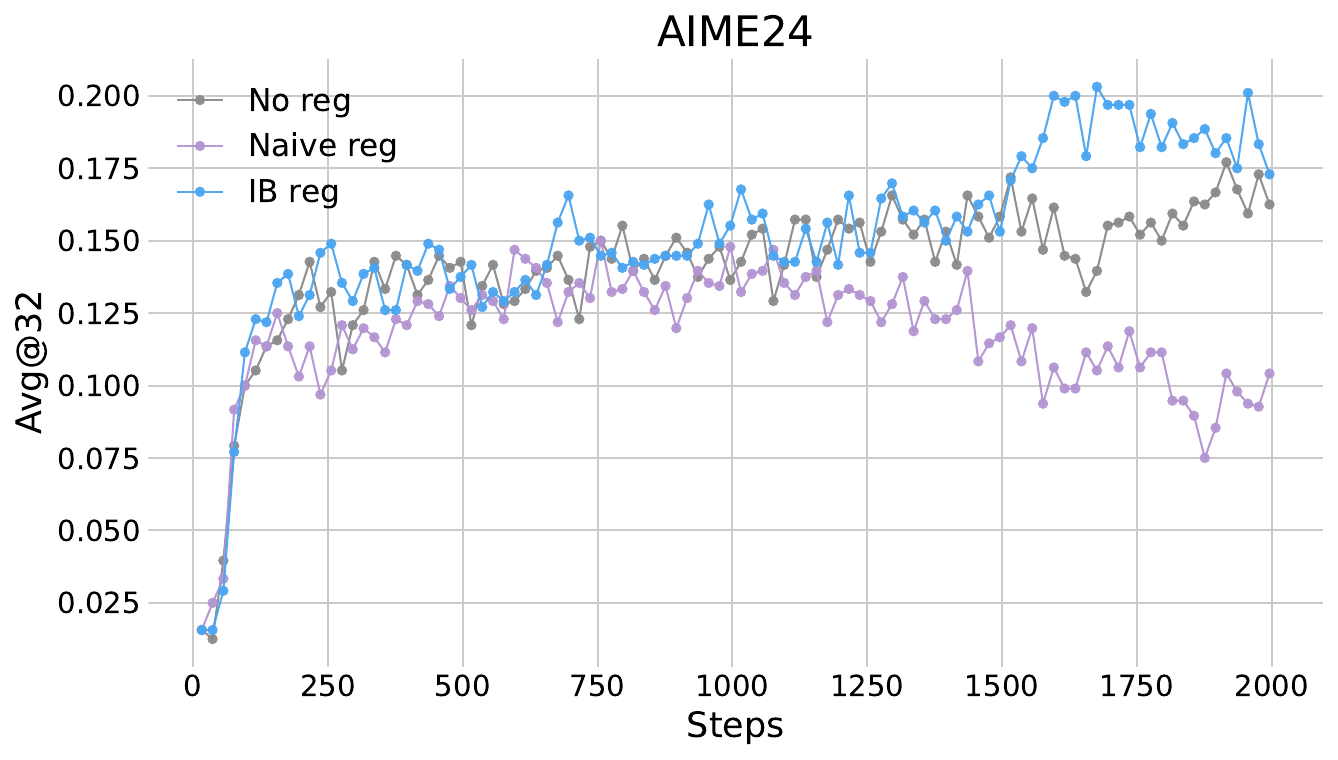}
                \includegraphics[width=\columnwidth]{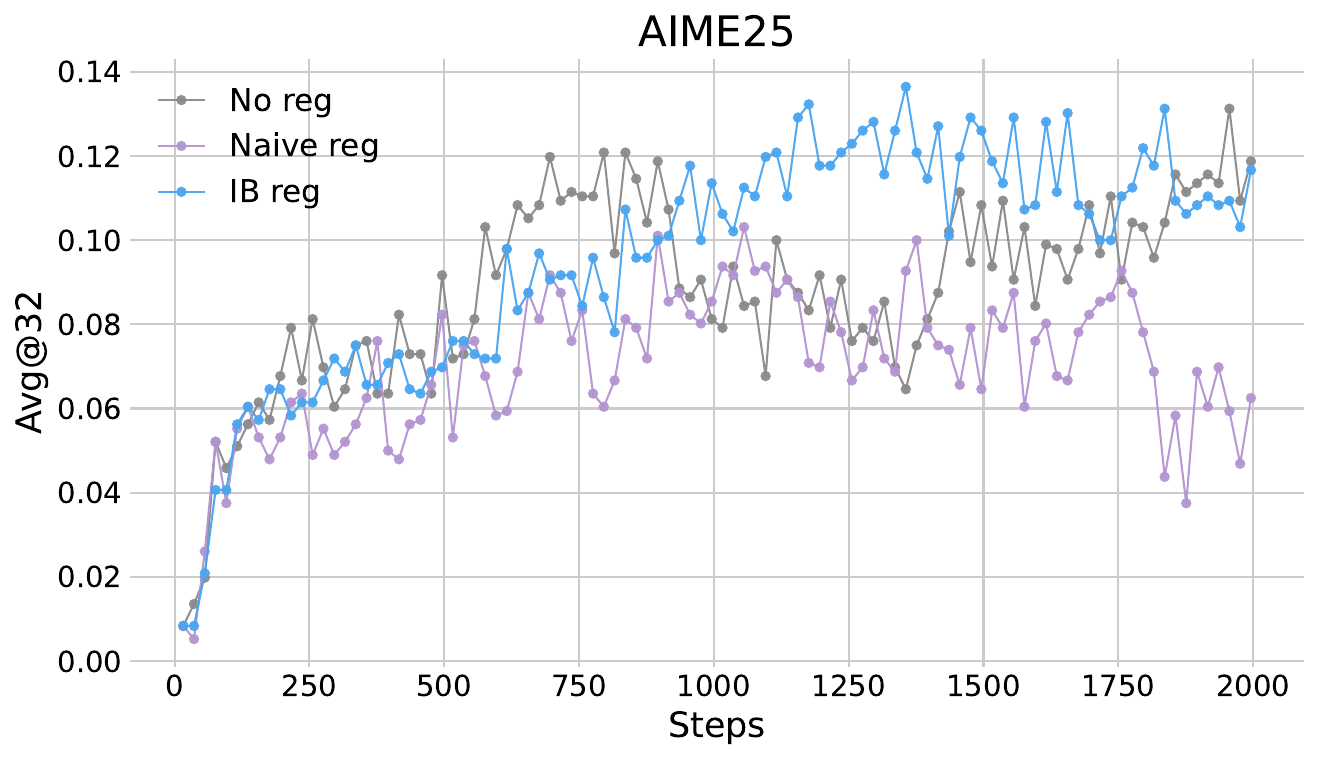}
    		\end{minipage}
		\label{figure:ppo}  
    	}
\subfigure[DAPO]{
\begin{minipage}[b]{0.48\textwidth}
\centering
    		\includegraphics[width=\columnwidth]{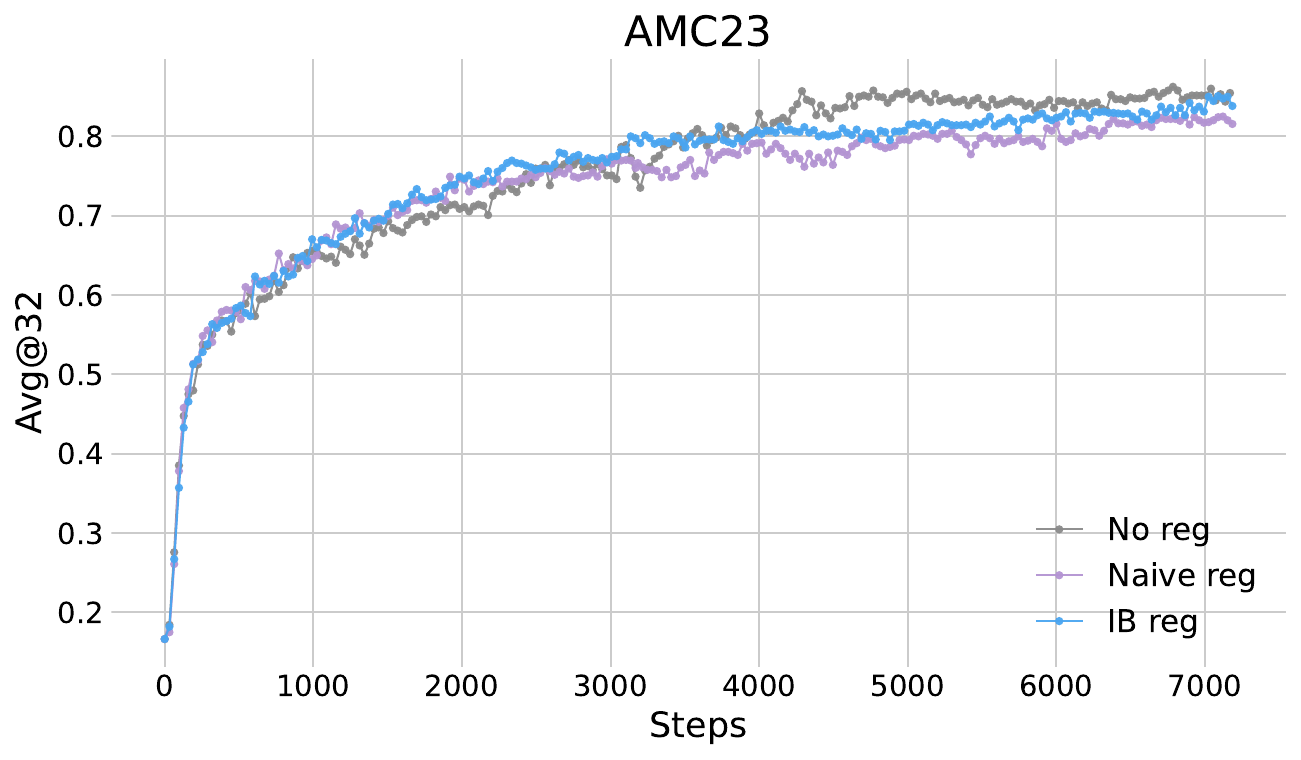}
                \includegraphics[width=\columnwidth]{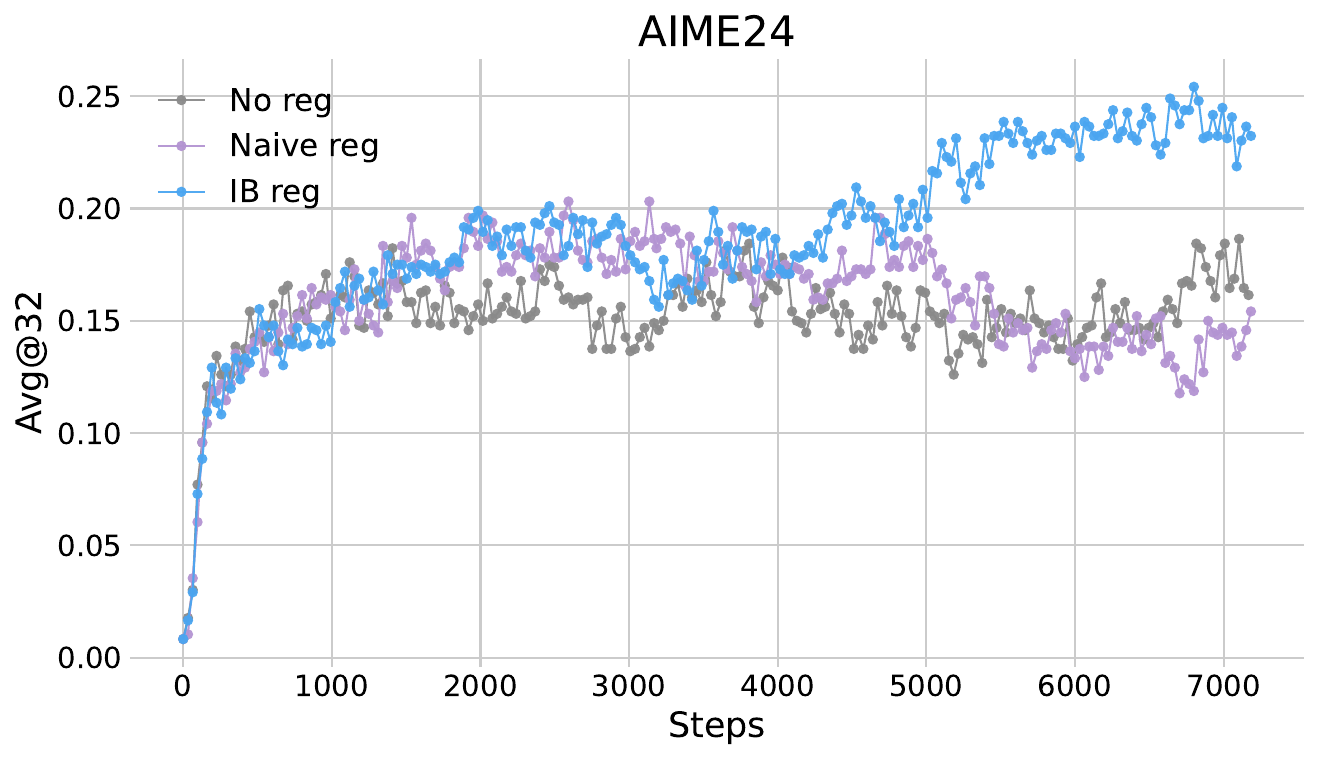}
                \includegraphics[width=\columnwidth]{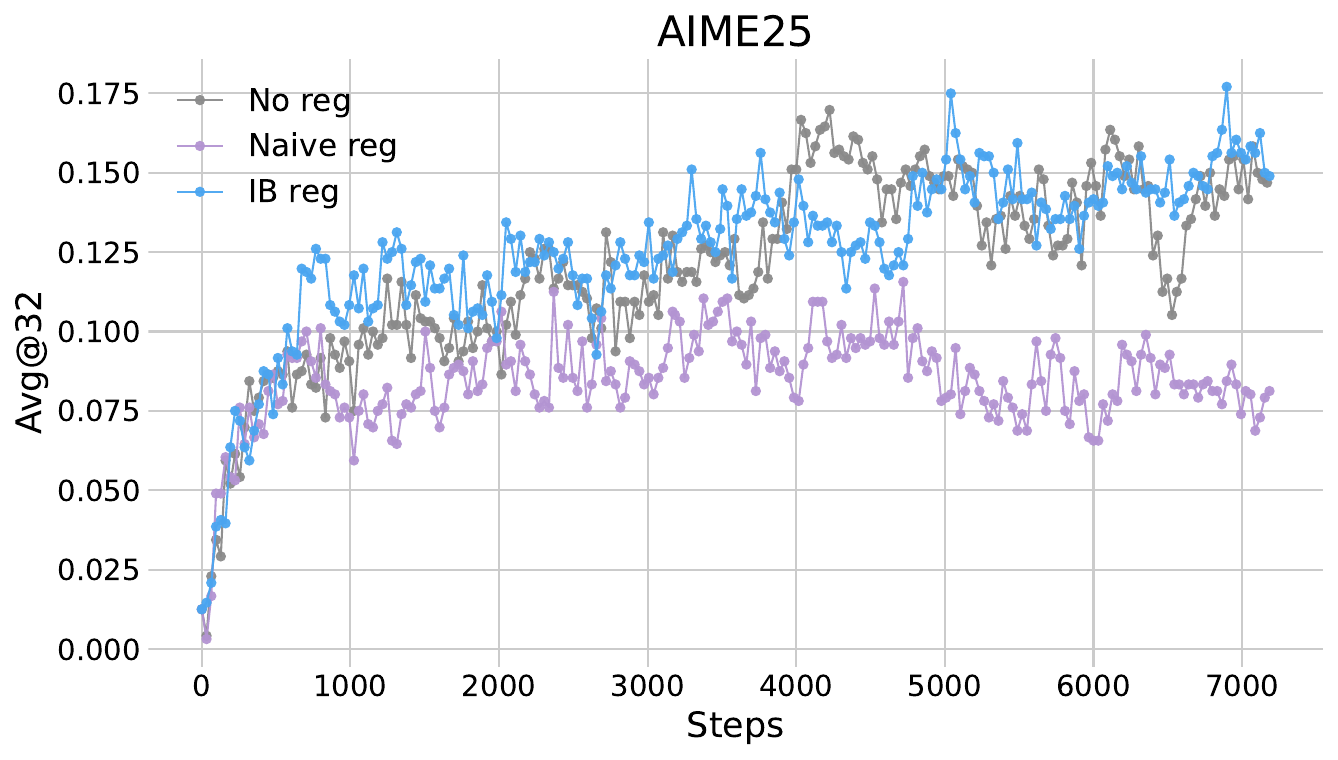}
    		\end{minipage}
		\label{figure:dapo}  
    	}
\caption{Plots of \texttt{avg@32} as functions of training steps in (a) PPO and (b) DAPO.}
\label{figure:curve}
\end{figure}

\subsection{Main Results}

The evaluation results are presented in \Cref{tab:results}. To robustly assess performance improvements, we report both the best score using \texttt{top@1} and the average score across the top ten checkpoints using \texttt{top@10}. Our empirical findings reveal several key insights. First, naive entropy regularization (\texttt{Naive reg}) consistently underperforms compared to vanilla training (\texttt{No reg}), with scores dropping from $31.5$ to $29.5$ in PPO and from $40.6$ to $38.1$ in DAPO. This suggests that indiscriminate entropy injection can degrade reasoning performance. Second, our proposed IB regularization (\texttt{IB reg}) yields consistent and substantial improvements over the baseline, with average gains of two points in both PPO ($31.5 \rightarrow 33.7$) and DAPO ($40.6 \rightarrow42.7$). These results demonstrate the effectiveness and robustness of our method.

To further support these findings, we present the training curves in \Cref{figure:curve}. As illustrated for PPO in \Cref{figure:ppo}, the \texttt{IB reg} curves consistently outperform the baselines across all evaluation benchmarks. For DAPO, shown in \Cref{figure:dapo}, although the \texttt{No reg} baseline performs best on AMC23, our \texttt{IB reg} achieves superior results on the remaining benchmarks, most notably on AIME24, where it reaches a score of $25.4$, while the baselines plateau around $20$.

\paragraph{Entropy Dynamics} 

Beyond performance metrics, we examine the entropy dynamics during post-training under both PPO and DAPO, as illustrated in \Cref{figure:entropy}. Compared to the no-regularization baseline, naive entropy regularization partially mitigates entropy collapse. However, it frequently drives the entropy to excessively high levels in the later stages of training, sometimes even exceeding its initial value. In contrast, IB regularization maintains entropy at a similar magnitude to the vanilla baseline, while exhibiting more stable and controlled behavior throughout training.

These observations offer several insights. First, given the inferior empirical performance of naive entropy regularization, excessive entropy can be as detrimental as entropy collapse for LLM reasoning, leading to unfocused exploration. Second, IB regularization does not seek to increase token entropy uniformly. Instead, it selectively redistributes entropy by encouraging higher entropy for critical tokens that benefit from exploration, while reducing entropy for less informative tokens to maintain coherence and fluency. Crucially, as IB regularization preserves the overall entropy scale, it is highly compatible with existing training pipelines: in practice, modulating the degree of off-policyness in RL, tuning sampling temperature, or employing the \texttt{ClipHigher} strategy, provides a more stable and general mechanism for maintaining a balanced entropy range, compared to explicit entropy regularization \citep{he2025skywork,yu2025dapo,liu2025scalingrlunlockingdiverse}. As a result, our IB regularization can be seamlessly integrated into well-tuned setups without significantly disrupting the entropy dynamics.

\paragraph{Response Length Analysis} 
Previous studies have shown that improvements in response length are often closely associated with gains in reasoning accuracy, as longer responses tend to reflect more complete and detailed reasoning processes. \Cref{figure:resp_len} illustrates the evolution of mean response length throughout post-training. We observe two key patterns: (1) IB regularization does not consistently produce longer responses compared to the vanilla baseline. Concretely, it yields shorter responses under PPO and longer ones under DAPO. Nevertheless, the response length under IB regularization exhibits stable growth and consistently remains within a desirable range of $2$K$-$$3$K tokens; and (2) naive entropy regularization tends to shorten responses, especially under PPO, with a less pronounced effect in DAPO. This outcome, while somewhat counter-intuitive, has not been formally documented in prior work. It challenges the common belief that higher entropy promotes greater exploration and thus results in longer reasoning trajectories.

We attribute this phenomenon to the behavior of the end-of-sequence token (\texttt{[EOS]}). During early decoding stages, the probability of emitting \texttt{[EOS]} is typically low. However, naive entropy regularization increases entropy uniformly across all tokens, which flattens the output probability distribution and can inadvertently raise the likelihood of generating \texttt{[EOS]}, thereby causing premature truncation and shorter responses. In contrast, IB regularization selectively increases entropy for critical tokens while suppressing it for less informative ones. This suppression often raises the relative probability of sampled uninformative tokens while reducing that of others, such as \texttt{[EOS]}. As a result, IB regularization mitigates premature termination and better preserves the response length required for effective multi-step reasoning.

\begin{figure}[t]
\centering
\includegraphics[width=0.48\columnwidth]{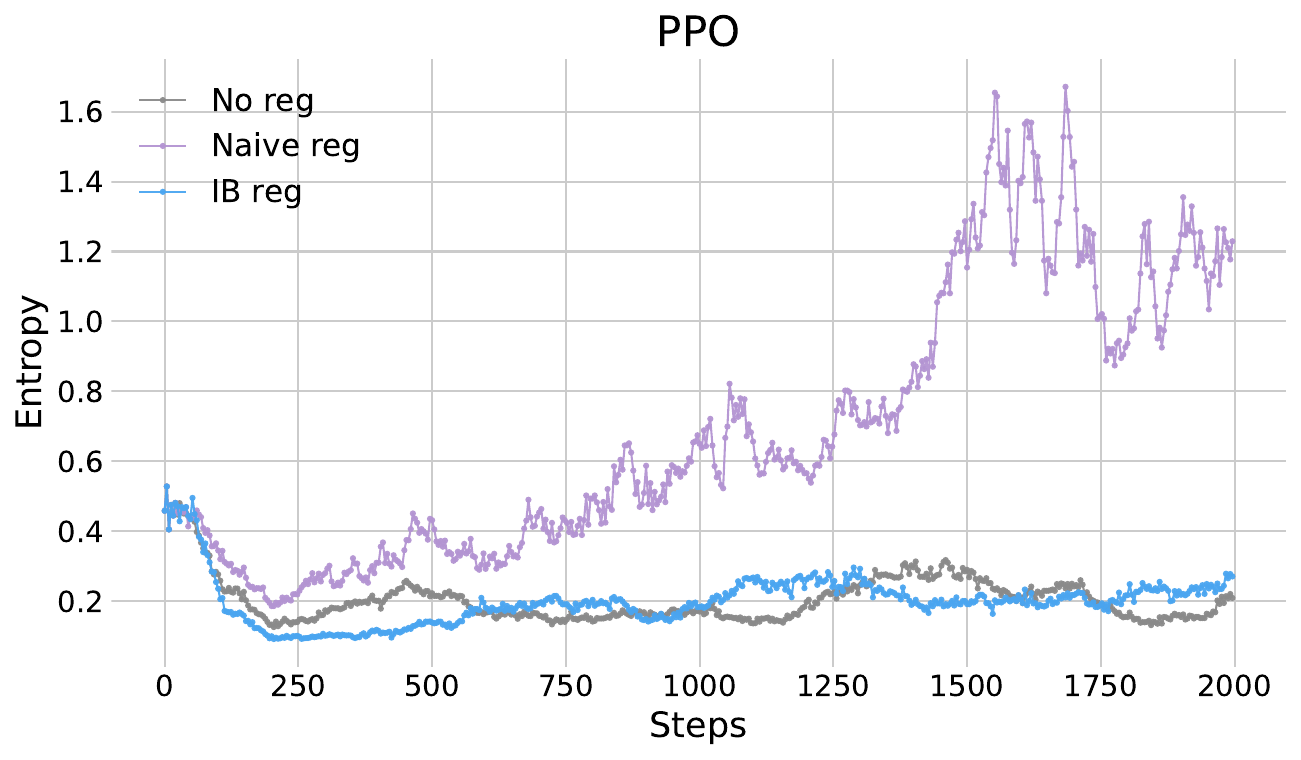}
\includegraphics[width=0.48\columnwidth]{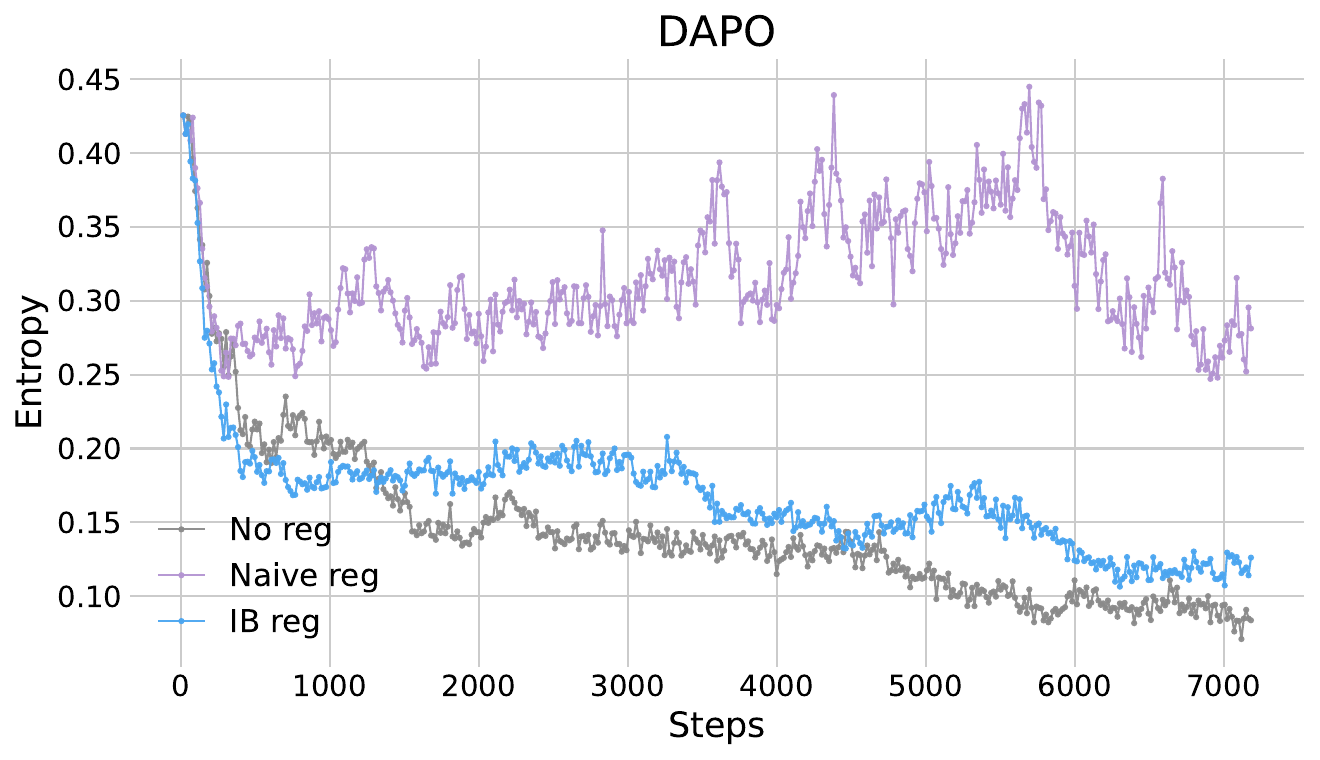}
\caption{Plots of entropy as functions of training steps.}
\label{figure:entropy}
\end{figure}

\begin{figure}[t]
\centering
\includegraphics[width=0.48\columnwidth]{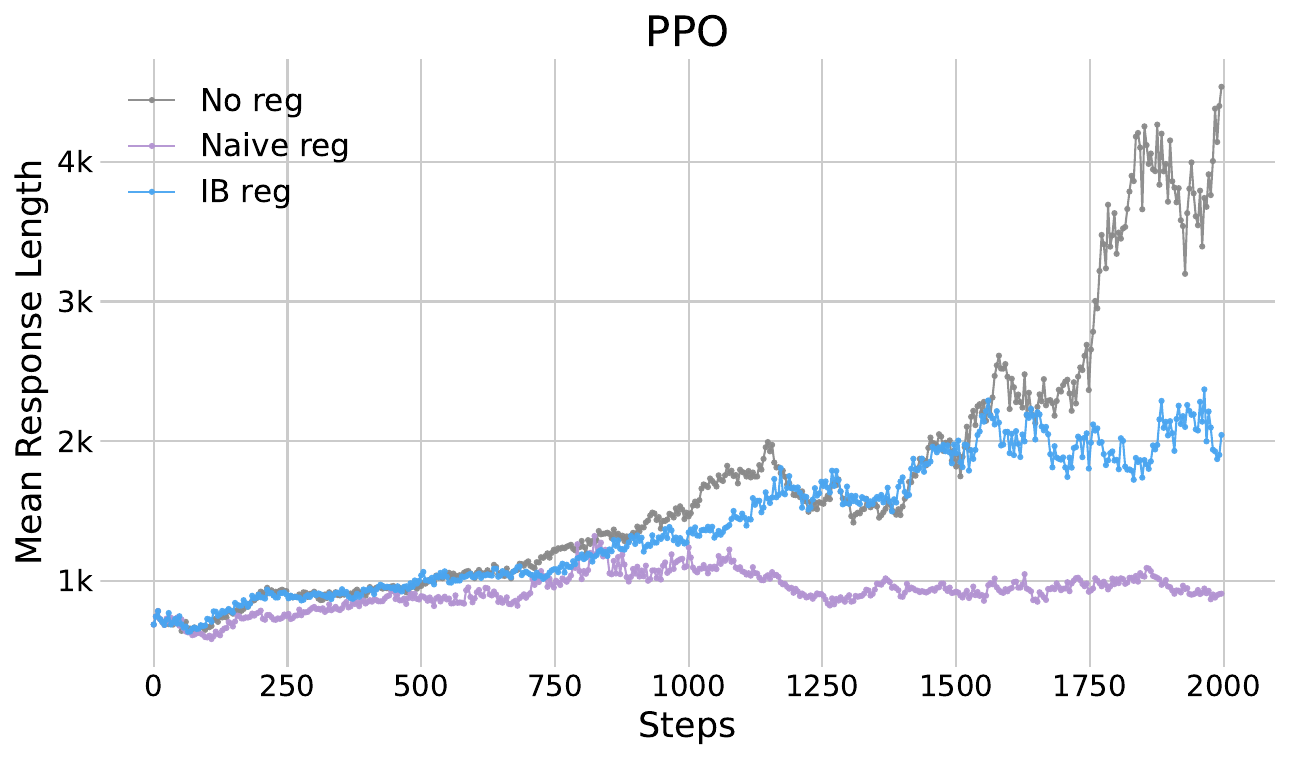}
\includegraphics[width=0.48\columnwidth]{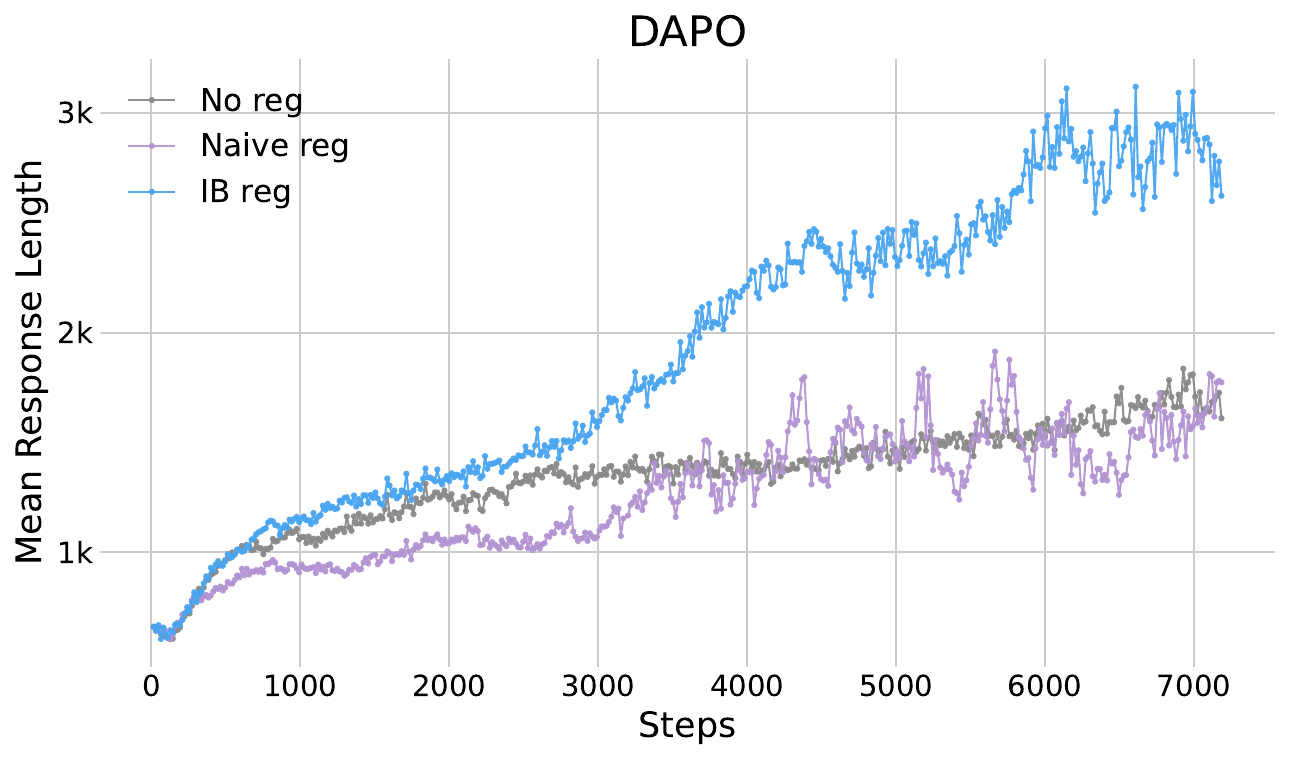}
\caption{Plots of mean response length as functions of training steps.}
\label{figure:resp_len}
\end{figure}


\section{Discussion and Limitation}

\paragraph{Discussion}
The proposed IB regularization can be interpreted as a token-level entropy regularization scheme, where the regularization strength for each token is weighted by its corresponding advantage. In PPO, token-level advantages vary within a single response and are provided by the critic model. As a result, critical tokens, {\it i.e.}, tokens with higher advantages, receive stronger entropy regularization. This mechanism focuses optimization on informative positions, promotes targeted exploration, and contributes to improved reasoning generalization.

In contrast, GRPO and DAPO assign the same scalar advantage to all tokens within a response, based on the group-normalized final reward. Under this setting, IB regularization degrades to applying a positive entropy regularization to correct responses and a negative one to incorrect responses. Since updates on correct responses tend to concentrate the output distribution and reduce entropy, while updates on incorrect responses often flatten the distribution and increase entropy, IB regularization serves as a soft constraint or entropy-aware damping force that counteracts excessive entropy reduction on correct responses and curbs entropy explosion on incorrect ones. This mechanism encourages conservative token-level entropy adjustments, helping to maintain a well-balanced entropy profile and ensuring stable training.

Moreover, in \Cref{sec:ib reg}, we derive IB regularization by setting $\beta = 2$, yielding a symmetric range of $[-H_t, H_t]$. However, our IBRO framework is more flexible and allows for an additional hyperparameter $\eta \in \mathbb{R}$, which leads to an generalized formulation $\mathcal{J}_\texttt{IB} = (A_t + \eta) H_t$. This extension enables asymmetric control over the entropy shaping and offers finer-grained trade-off between exploration and exploitation.

\paragraph{Limitation} 
While our results demonstrate the effectiveness of IB regularization, several limitations remain. First, the regularization strength coefficient requires careful tuning to achieve optimal performance, and its optimal value may vary across tasks, model sizes, and training stages. Developing automated or adaptive tuning strategies remains an open challenge. Second, due to computational constraints, our experiments are conducted on models with $7$B parameters. The scalability and effectiveness of IB regularization on larger LLMs, such as those with $32$B parameters or more, have not yet been validated. Exploring this setting is non-trivial, as running RL post-training on very large LLMs demands substantially greater computational resources, potentially exceeding an order of magnitude beyond our current setup. We leave this investigation to future work.

\section{Conclusion}

In this paper, we introduce an information-theoretic framework called \textit{\textbf{i}nformation \textbf{b}ottleneck-aware \textbf{r}easoning \textbf{o}ptimization} (IBRO) to optimize reasoning trajectories in large language models (LLMs). Grounded in information bottleneck principle, IBRO formalizes reasoning effectiveness by encouraging trajectories to be informative regarding the correct answer while remaining generalizable across different prompts. We further derived a token-level surrogate objective and proposed a practical approximation, termed \textit{IB regularization}, which modulates token-level entropy according to token-level advantage readily available from RL frameworks. The method requires no additional computation and can be implemented with only one line of code modification. Empirical evaluations across several mathematical reasoning benchmarks demonstrate that integrating IB regularization into existing RL algorithms consistently enhances reasoning accuracy and stability. Our results underline the significance of information-theoretic insights in optimizing LLM reasoning, providing theoretical foundations and practical tools for future research in this direction.

\bibliographystyle{unsrtnat}   

\bibliography{ref}
\newpage
\appendix

\section{Proof of Theorem 2}

\label{app:proof_main}
Our proof is based on the theoretical results in \citep{kawaguchi2023does}, which we first outline a simply version as below.

\paragraph{Notation} We denote the input and output variables by \( X \) and \( Y \), respectively. Consider a neural network \( f = g \circ \phi \), which comprises two components: an encoder \( \phi \) that maps inputs \( X \) to latent features \( Z = \phi(X) \), and a predictor \( g \) that generates predictions \( g(Z) \) based on these latent representations. Let \(\mathcal{S} = \{(\bm{x}_i, \bm{y}_i)\}_{i=1}^m\) be a training dataset consisting of \( m \) examples drawn independently and identically distributed (i.i.d.) from a joint distribution \(\mathcal{P}\) over \(\mathcal{X} \times \mathcal{Y}\), with \(\bm{x}_i \in \mathcal{X}\) and \(\bm{y}_i \in \mathcal{Y}\). Given a bounded loss function \(\ell: \mathcal{X}\times\mathcal{Y}\rightarrow\mathbb{R}^+\), the generalization gap, defined as the difference between the expected and empirical losses, is expressed as:
\[
\Delta(\mathcal{S}) := \mathbb{E}_{(X, Y)\sim \mathcal{P}}\left[\ell(f^\mathcal{S}(X), Y)\right] - \frac{1}{m}\sum_{i=1}^m\ell(f^\mathcal{S}(\bm{x}_i), \bm{y}_i),
\]
where \( f^\mathcal{S} \) denotes the neural network trained on the dataset \(\mathcal{S}\).

\begin{theorem}[Theorem $2$ in \citep{kawaguchi2023does}] 
\label{thm:kawa}
Given a network $f=g\circ \phi$ trained on the dataset $\mathcal{S}$, the dataset size $|\mathcal{S}|=m$. Then, for any $\delta >0$, with probability at least $1-\delta$ over the training set $\mathcal{S}$, the following generalization bound holds:
\begin{equation}
   \Delta(\mathcal{S})  \lesssim \sqrt{\frac{I(X;Z | Y) + I(\phi;{S})+ H(Z|X,Y)+\log\frac{1}{\delta}}{m}} + \tilde{\mathcal{O}}\left(\sqrt{\frac{I(\phi,S)+1}{m}}\right)
\end{equation}
\end{theorem}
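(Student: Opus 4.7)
The plan is to recover Kawaguchi et al.'s bound via a two-layer information-theoretic decomposition that treats the encoder $\phi$ and the predictor $g$ separately. First, I would exploit the structure $f^\mathcal{S} = g \circ \phi^\mathcal{S}$ and isolate the two sources of data dependence: the encoder learned from $\mathcal{S}$, contributing $I(\phi;\mathcal{S})$, and the latent code $Z = \phi(X)$ carrying input information, which the chain-rule identity $H(Z\mid Y) = I(X;Z\mid Y) + H(Z\mid X,Y)$ splits into a "bottleneck residual" piece $I(X;Z\mid Y)$ and an "encoder stochasticity" piece $H(Z\mid X,Y)$. These three quantities are precisely the ones appearing inside the square root in the quoted bound, so the goal of the proof reduces to routing the generalization gap into exactly this decomposition.

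Second, I would combine two classical ingredients. For the outer randomness in $\phi^\mathcal{S}$, a PAC-Bayesian inequality with a data-independent prior over encoders and an $\mathcal{S}$-dependent posterior yields, with probability at least $1-\delta$, a deviation of order $\sqrt{(I(\phi;\mathcal{S}) + \log(1/\delta))/m}$, together with the $\tilde{\mathcal{O}}(\sqrt{(I(\phi;\mathcal{S})+1)/m})$ slack obtained through a covering / chaining argument over the hypothesis class (which is the origin of the separate $\tilde{\mathcal{O}}$ term). Conditional on $\phi$, an information-theoretic bound in the spirit of Xu--Raginsky applies to the predictor $g$ acting on $Z$: since the loss is bounded and hence sub-Gaussian, the empirical-to-population gap is controlled by $I(Z;\mathcal{S}\mid\phi)/m$, which can then be upper-bounded by $I(X;Z\mid Y) + H(Z\mid X,Y)$ using the data-processing inequality applied along the Markov chain $\mathcal{S}\to X\to Z$ and the label-conditioning identity above. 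Adding the two contributions under the same square root produces the stated bound.

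The main obstacle I anticipate is the clean interleaving of the two layers of randomness: the PAC-Bayes posterior is the distribution over $\phi$ given $\mathcal{S}$, whereas the Xu--Raginsky-style bound requires expectations over a fresh sample, and one must ensure the conditional mutual information and entropy terms are evaluated under the true data-generating distribution rather than the empirical one. A secondary subtlety is absorbing the residual log factors, such as encoder covering numbers and the variance-to-range conversion for sub-Gaussian losses, into the $\tilde{\mathcal{O}}$ notation without contaminating the leading $\sqrt{\cdot/m}$ rate; here I expect to borrow the discretization argument from Kawaguchi et al. directly rather than reprove it.
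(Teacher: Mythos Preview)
The paper does not prove this statement. It is quoted verbatim as ``Theorem~2 in \citep{kawaguchi2023does}'' and used as a black box: the appendix combines it with \Cref{lemma:kl-bound} (bounding $I(\phi;\mathcal{S})$ by $\|\Delta\theta\|^2$) and the inequality $I(\bm{q};\bm{r}\mid\bm{a}) + H(\bm{r}\mid\bm{q},\bm{a}) \le \mathcal{L}_\texttt{IB} + H(\bm{q}\mid\bm{a})$ to obtain the paper's own \Cref{thm:bound}. There is therefore nothing in this paper to compare your proposal against.

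Your sketch is a plausible reconstruction of how Kawaguchi et al.\ arrive at their bound, and the chain-rule identity $H(Z\mid Y)=I(X;Z\mid Y)+H(Z\mid X,Y)$ you highlight is indeed the mechanism that produces the two information terms inside the square root. But note that you are proposing to reprove an external theorem that the present paper simply cites; if your goal is to match the paper, the correct move is to invoke the result without proof and then carry out the two short steps (the KL/parameter-norm lemma and the $\beta\ge 2$ entropy inequality) that the appendix actually contains.
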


To upper bound the mutual information between model parameters \( \phi \) and dataset \( {S} \), we invoke the following lemma.
\begin{lemma}
\label{lemma:kl-bound}
Let the encoder parameters \( \theta \in \mathbb{R}^d \) be updated from initialization \( \theta_0 \) to \( \theta = \theta_0 + \Delta\theta \) after training on dataset \( {S} \).  
Let the prior distribution be \( P_0 = \mathcal{N}(\theta_0, \sigma^2 I) \), and the posterior \( P_{\Theta|{S}} = \delta(\theta_0 + \Delta\theta) \).  
Then, for any \( \sigma > 0 \), the mutual information satisfies:
\[
I(\theta; {S}) \leq \frac{\|\Delta \theta\|^2}{2 \sigma^2} + \frac{d}{2} \log(2\pi \sigma^2).
\]
\end{lemma}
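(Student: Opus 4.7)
The plan is to invoke the standard variational (PAC--Bayesian / Donsker--Varadhan) upper bound on mutual information: for any distribution $Q$ on $\mathbb{R}^{d}$ that is independent of $S$,
\[
I(\theta; S) \;\le\; \mathbb{E}_{S}\!\left[\mathrm{KL}\bigl(P_{\theta|S} \,\|\, Q\bigr)\right].
\]
I would set $Q$ equal to the stated Gaussian prior $P_{0} = \mathcal{N}(\theta_{0}, \sigma^{2} I)$, which is manifestly $S$-independent and whose closed-form log-density carries exactly the quadratic and normalization pieces appearing in the target bound.

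Next I would unfold the KL as the difference between a cross-entropy and a posterior entropy,
\[
\mathrm{KL}(P_{\theta|S} \,\|\, P_{0}) \;=\; -\,\mathbb{E}_{\theta \sim P_{\theta|S}}\!\left[\log P_{0}(\theta)\right] \;-\; H(P_{\theta|S}),
\]
and use that the posterior is concentrated at $\theta^{*} = \theta_{0} + \Delta\theta$ to reduce the cross-entropy term to $-\log P_{0}(\theta^{*})$. A direct computation from the isotropic Gaussian density then yields
\[
-\log P_{0}(\theta^{*}) \;=\; \frac{\|\Delta\theta\|^{2}}{2\sigma^{2}} + \frac{d}{2}\log(2\pi\sigma^{2}),
\]
which matches the claim exactly.

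The hard part will be handling the posterior entropy $H(P_{\theta|S})$ for a Dirac posterior: a point mass is singular against Lebesgue measure, so interpreted literally the KL of $\delta(\theta^{*})$ against any continuous prior is infinite. My plan is to regularize by replacing $\delta(\theta^{*})$ with a Gaussian bump $\mathcal{N}(\theta^{*}, \tau^{2} I)$, compute the resulting Gaussian-to-Gaussian KL in closed form, and then cancel the diverging $-\tfrac{d}{2}\log\tau^{2}$ contribution against a data-independent ``coding cost'' term in the spirit of McAllester-type PAC--Bayesian arguments. Equivalently, one can adopt the cross-entropy surrogate $I(\theta; S) \le -\mathbb{E}[\log Q(\theta)]$ that is standard in information-theoretic generalization analyses and treats the Dirac-entropy contribution as an inessential constant. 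Once this limit is taken carefully, the resulting bound plugs directly into \Cref{thm:kawa} without affecting the form of the final generalization guarantee claimed in \Cref{thm:bound}.
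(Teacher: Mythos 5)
Your proposal follows essentially the same route as the paper: upper-bound $I(\theta;S)$ by $\mathbb{E}_{S}\left[\mathrm{KL}(P_{\Theta|S}\,\|\,P_0)\right]$ with the data-independent Gaussian prior, then evaluate the Gaussian negative log-density at $\theta_0+\Delta\theta$, which yields exactly $\frac{\|\Delta\theta\|^2}{2\sigma^2}+\frac{d}{2}\log(2\pi\sigma^2)$. The only place you diverge is in handling the Dirac posterior, and your concern there is legitimate: the paper simply asserts that $\mathrm{KL}(\delta(\theta_0+\Delta\theta)\,\|\,\mathcal{N}(\theta_0,\sigma^2 I))$ equals the cross-entropy term, without noting that the KL of a point mass against an absolutely continuous prior is $+\infty$; in effect it silently adopts the cross-entropy surrogate you describe as your fallback. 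Your alternative of smoothing the posterior to $\mathcal{N}(\theta_0+\Delta\theta,\tau^2 I)$ and cancelling the divergent $-\frac{d}{2}\log\tau^2$ term would still need to be made precise (the cancellation against a ``coding cost'' is not spelled out), but as written you reproduce the paper's argument and are, if anything, more careful about its one genuinely delicate step.
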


\begin{proof}
By definition,
\[
I(\theta; {S}) = \mathbb{E}_{{S}} \left[ \mathrm{KL}(P_{\Theta|{S}} \| P_{\Theta}) \right] \leq \mathbb{E}_{{S}} \left[ \mathrm{KL}(P_{\Theta|{S}} \| P_0) \right],
\]
using convexity of KL and data-independent prior \( P_0 \).  
Since \( P_{\Theta|{S}} = \delta(\theta_0 + \Delta\theta) \) and \( P_0 = \mathcal{N}(\theta_0, \sigma^2 I) \), we compute
\[
\mathrm{KL}(\delta(\theta_0 + \Delta\theta) \,\|\, \mathcal{N}(\theta_0, \sigma^2 I)) = \frac{\|\Delta\theta\|^2}{2\sigma^2} + \frac{d}{2} \log(2\pi \sigma^2).
\]
\end{proof}
As $H(\bm{r}\mid \bm{q},\bm{a})=H(\bm{r}\mid \bm{q})$ under the Markov chain $\bm{a} \leftrightarrow \bm{q} \leftrightarrow \bm{r}$, we have
\begin{align*}
I(\bm{q}; \bm{r} \mid \bm{a}) = H(\bm{r}\mid \bm{a}) - H(\bm{r}\mid \bm{q},\bm{a}) = H(\bm{r} \mid \bm{a}) - H(\bm{r} \mid \bm{q}).
\end{align*}
Then
\[
H(\bm{r} \mid \bm{a}) \leq  H(\bm{r},\bm{q} \mid \bm{a}) = H(\bm{r} \mid \bm{q}, \bm{a}) + H(\bm{q} \mid \bm{a}),
\]
which gives the inequality
\[
I(\bm{q}; \bm{r} \mid \bm{a}) \leq H(\bm{r} \mid \bm{q}, \bm{a}) + H(\bm{q} \mid \bm{a}) - H(\bm{r} \mid \bm{q}).
\]
Recall the surrogate IBRO loss
\[
\mathcal{L}_\texttt{IB} = \beta\, H(\bm{r} \mid \bm{q}, \bm{a}) - H(\bm{r} \mid \bm{q}),
\]
then for \( \beta \geq 2 \), we obtain
\[
I(\bm{q}; \bm{r} \mid \bm{a}) + H(\bm{r} \mid \bm{q}, \bm{a}) \leq 2 H(\bm{r} \mid \bm{q}, \bm{a}) + H(\bm{q} \mid \bm{a}) - H(\bm{r} \mid \bm{q}) \leq \mathcal{L}_\texttt{IB} + H(\bm{q} \mid \bm{a}).
\]
Therefore, under the mild condition \( \beta \geq 2 \), the quantity \( \mathcal{L}_\texttt{IB} + H(\bm{q} \mid \bm{a}) \) serves as an upper bound on \( I(\bm{q}; \bm{r} \mid \bm{a}) + H(\bm{r} \mid \bm{q}, \bm{a}) \). Since \( H(\bm{q} \mid \bm{a}) \) is a constant that depends only on the data distribution \( \mathcal{P} \), combining \Cref{thm:kawa} and \Cref{lemma:kl-bound} yields the desired bound.

\newpage

\section{IB Regularization Modification on VeRL}
\label{app:code}
We provide an example of modifying the VeRL framework to switch from naive entropy regularization to our proposed IB regularization; please refer to \Cref{lst:verl modification}. This change requires only three lines of code in \href{https://github.com/volcengine/verl/blob/916ab431b7956480ba75cbbb323b1979ce8e2743/verl/workers/actor/dp_actor.py#L403}{verl/workers/actor/dp\_actor.py} (with PyTorch FSDP Backend).


\begin{figure}[ht]

\centering
\begin{tcolorbox}[
    colframe=gray,       
    colback=white,       
    boxrule=0.5pt,       
    arc=2pt,             
    left=0pt, right=4pt, top=1pt, bottom=2pt, 
    width=0.98\linewidth, 
    enhanced
]
\noindent
\begin{normalcode}
    if entropy_coeff != 0:\end{normalcode}
\begin{diffaddcode}
+       token_advantages = data["advantages"]    
\end{diffaddcode}
\begin{diffdelcode}
-       entropy_loss = agg_loss(loss_mat=entropy, loss_mask=response_mask, loss_agg_mode=loss_agg_mode)\end{diffdelcode}
\begin{diffaddcode}
+       entropy_loss = agg_loss(loss_mat=entropy * token_advantages, loss_mask=response_mask, loss_agg_mode=loss_agg_mode)
\end{diffaddcode}
\begin{normalcode}
        # compute policy loss
\end{normalcode}
\begin{normalcode}
        policy_loss = pg_loss - entropy_loss * entropy_coeff
\end{normalcode}
\begin{normalcode}
    else:
\end{normalcode}
\begin{normalcode}
        policy_loss = pg_loss
\end{normalcode}
\end{tcolorbox}
\vspace{-2mm}
\captionof{listing}{IB regulareization modification on VeRL.}
\label{lst:verl modification}
\end{figure}


\section{Implementation Details}
\label{app:implementation details}
The pre-trained LLM of Qwen2.5-7B can be download via \url{https://huggingface.co/Qwen/Qwen2.5-7B}. The training dataset DAPO-Math-17K is available at \url{https://huggingface.co/datasets/BytedTsinghua-SIA/DAPO-Math-17k}, and the evaluation datasets of AMC23, AIME24, and AIME25 can be download on \url{https://huggingface.co/math-ai}. \Cref{tab:hyperparams} lists the key hyperparameters used in PPO and DAPO. All experiments are conducted on 4$\times$8 NVIDIA H20 GPUs.


\begin{table}[h]
\centering
\small
\setlength{\tabcolsep}{6pt}
\renewcommand{\arraystretch}{1.1}
\caption{Key hyperparameters for PPO and DAPO. “—” denotes not used.}
\label{tab:hyperparams}
\begin{tabular}{lll}
\toprule
\textbf{Category} & \textbf{PPO} & \textbf{DAPO} \\
\midrule
\multicolumn{3}{l}{\textit{Sampling and Validation}} \\
\quad Temperature & $1.0$ & $1.0$ \\
\quad Top-$p$ / Val Top-$p$ & $1.0$ / $0.7$ & $1.0$ / $0.7$ \\
\midrule
\multicolumn{3}{l}{\textit{Clipping}} \\
\quad Clip ratio (low / high) & $0.2$ / $0.28$ & $0.2$ / $0.28$ \\
\midrule
\multicolumn{3}{l}{\textit{Sequence Limits}} \\
\quad Max prompt / response length & $2048$ / $20480$ & $2048$ / $20480$ \\
\quad Overlong buffer (on/off) & off & on \\
\quad Buffer length / penalty & — & 4096 / 1.0 \\
\midrule
\multicolumn{3}{l}{\textit{Batching}} \\
\quad Train batch / mini-batch size & $1024$ / $256$ & $512$ / $32$ \\
\quad Gen batch size & — & $512\times 3$ \\
\quad Responses per prompt & — & $16$ \\
\midrule
\multicolumn{3}{l}{\textit{Optimization}} \\
\quad Loss aggregation & \texttt{token-mean} & \texttt{token-mean} \\
\quad Actor LR & $1e-6$ & $1e-6$ \\
\quad Critic LR & $1e-5$ & — \\
\quad LR warmup steps & $10$ & $10$ \\
\quad Critic warmup steps & 5 & — \\
\bottomrule
\end{tabular}
\end{table}

\end{document}